\documentclass{article} 
\usepackage{iclr2026_conference,times}

\usepackage{amsmath,amsfonts,bm}

\def\eqref#1{equation~\ref{#1}}

\def\1{\bm{1}}

\DeclareMathAlphabet{\mathsfit}{\encodingdefault}{\sfdefault}{m}{sl}
\SetMathAlphabet{\mathsfit}{bold}{\encodingdefault}{\sfdefault}{bx}{n}

\usepackage{hyperref}
\usepackage{url}
\usepackage{graphicx}
\usepackage{subcaption}
\usepackage{soul}
\usepackage{graphicx}
\usepackage{amsmath}
\usepackage{multirow}
\usepackage{tabularx}
\usepackage{colortbl}
\usepackage{wrapfig} 
\usepackage{pifont}
\usepackage{booktabs}
\usepackage{amsthm}   
\usepackage{amsmath}  
\usepackage{amsfonts} 
\usepackage{adjustbox}

\usepackage[table]{xcolor} 
\usepackage{colortbl}      

\usepackage{amsthm}
\newtheorem{lemma}{Lemma}[section]
\newtheorem{theorem}{Theorem}
\newtheorem{proof}{Proof}

\title{COMI: Coarse-to-fine Context Compression via Marginal Information Gain}

\author{Jiwei Tang$^{1,2}$ \hspace{0.6mm} Shilei Liu$^2$ \hspace{0.6mm} Zhicheng Zhang$^{1}$ \hspace{0.6mm} Yujin Yuan$^{2}$ \hspace{0.6mm} Libin Zheng$^3$\thanks{Corresponding authors} \hspace{2mm}Wenbo Su$^{2}$ \\ \textbf{Bo Zheng$^{2*}$} \\ $^1$Tsinghua University \hspace{2mm} $^2$Future Living Lab of Alibaba \hspace{2mm} $^3$Sun Yat-sen University \\
\texttt{tangjw24@mails.tsinghua.edu.cn} \hspace{2mm} \texttt{zhenglb6@mail.sysu.edu.cn} \\
\texttt{bozheng@alibaba-inc.com} \\
}

\iclrfinalcopy 
\begin{document}

\maketitle


\begin{abstract}
Large Language Models (LLMs) have demonstrated exceptional capabilities across diverse tasks. However, their deployment in long context scenarios remains hindered by computational inefficiency and information redundancy. Context compression methods address these challenges by significantly reducing input length and eliminating redundancy. We propose \textbf{COMI}, a coarse-to-fine adaptive context compression framework that jointly optimizes for semantic relevance and diversity under high compression rates. We introduce \textit{Marginal Information Gain (MIG)}, a metric defined as the relevance of a unit to the input query minus its semantic redundancy with other units, guiding the compression process to prioritize information that is both relevant and low redundant. The framework operates in two stages: (1) \textbf{Coarse-Grained Group Reallocation}, where the context is partitioned into groups and dynamically assigned compression rates based on inter-group MIG, ensuring compression budgets align with information value distribution; and (2) \textbf{Fine-Grained Token Merging}, where tokens within each group are fused via an intra-group MIG-based weighting mechanism, thereby preserving key semantics while avoiding the accumulation of redundancy. Extensive experiments across question-answering (\textit{e.g.}, NaturalQuestions, 2WikiMQA, HotpotQA and NarrativeQA), summarization (\textit{e.g.}, MultiNews) with various backbones (\textit{e.g.}, LLaMA-2-7B, Qwen2-7B) show that COMI outperforms existing baselines by a large margin, \textit{e.g.}, approximately 25-point Exact Match (EM) improvement under 32x compression constraint with Qwen2-7B on NaturalQuestions\footnote{The code will be available at \url{https://github.com/Twilightaaa/COMI}}.
\end{abstract}

\section{Introduction}

Large Language Models (LLMs) achieve exceptional performance across a wide range of Natural Language Processing (NLP) tasks~\citep{DBLP:journals/corr/abs-2407-10671,DBLP:journals/corr/abs-2501-12948,kimiteam2025kimik2openagentic,DBLP:journals/corr/abs-2504-07282,DBLP:journals/corr/abs-2511-12913,DBLP:conf/kdd/LiuLY0YZWLWS00025}. However, recent advances in prompting techniques, such as Retrieval-Augmented Generation (RAG)~\citep{DBLP:conf/nips/LewisPPPKGKLYR020}, inevitably increase input length, introducing two key challenges when deploying LLMs in long context scenarios: (1) computational cost, as the quadratic complexity of the attention mechanism in Transformer~\citep{DBLP:conf/nips/VaswaniSPUJGKP17} leads to inefficiency with long sequences; and (2) information redundancy, where the presence of redundant content degrades model performance~\citep{DBLP:conf/acl/JiangWL0L0Q24,DBLP:conf/iclr/00010WWCW24,DBLP:conf/emnlp/LiuZHXLWXZ24,DBLP:conf/naacl/TangXLZZHZ25,DBLP:journals/corr/abs-2505-12215}.


Context compression emerges as a promising solution in the NLP community to address these challenges by significantly reducing input length and eliminating redundancy. Existing prompt compression methods mainly fall into two categories: task-agnostic context compression methods~\citep{DBLP:conf/iclr/XuSC24, DBLP:conf/acl/PanWJXLZLR0LZQ024,DBLP:conf/iclr/00010WWCW24, DBLP:conf/emnlp/TanLPWZK0P24,DBLP:conf/nips/0002W00CWZ024, DBLP:conf/iclr/Zhang0XSYD25, DBLP:conf/cvpr/YeGHGT25,DBLP:conf/acl/LiSC25,DBLP:journals/corr/abs-2505-15774, DBLP:journals/corr/abs-2505-12215} and task-aware context compression methods~\citep{DBLP:conf/acl/CaoCLPHCS24, DBLP:conf/acl/JiangWL0L0Q24,DBLP:conf/aaai/ZhaoWX25, DBLP:conf/naacl/TangXLZZHZ25,DBLP:journals/corr/abs-2503-10720,DBLP:conf/aaai/LiskavetsURKEL25}. LLMs typically allocate attention to only a small subset of query-relevant tokens (see Figure~\ref{fig:attn_weight}). Consequently, task-agnostic methods that compress context without considering the input query inevitably lose or dilute relevant information, especially at high compression rates. 

\begin{figure*}[htbp]
    \small
    \centering
    \begin{subfigure}{0.45\textwidth}
        \centering
        \includegraphics[height=4.5cm]{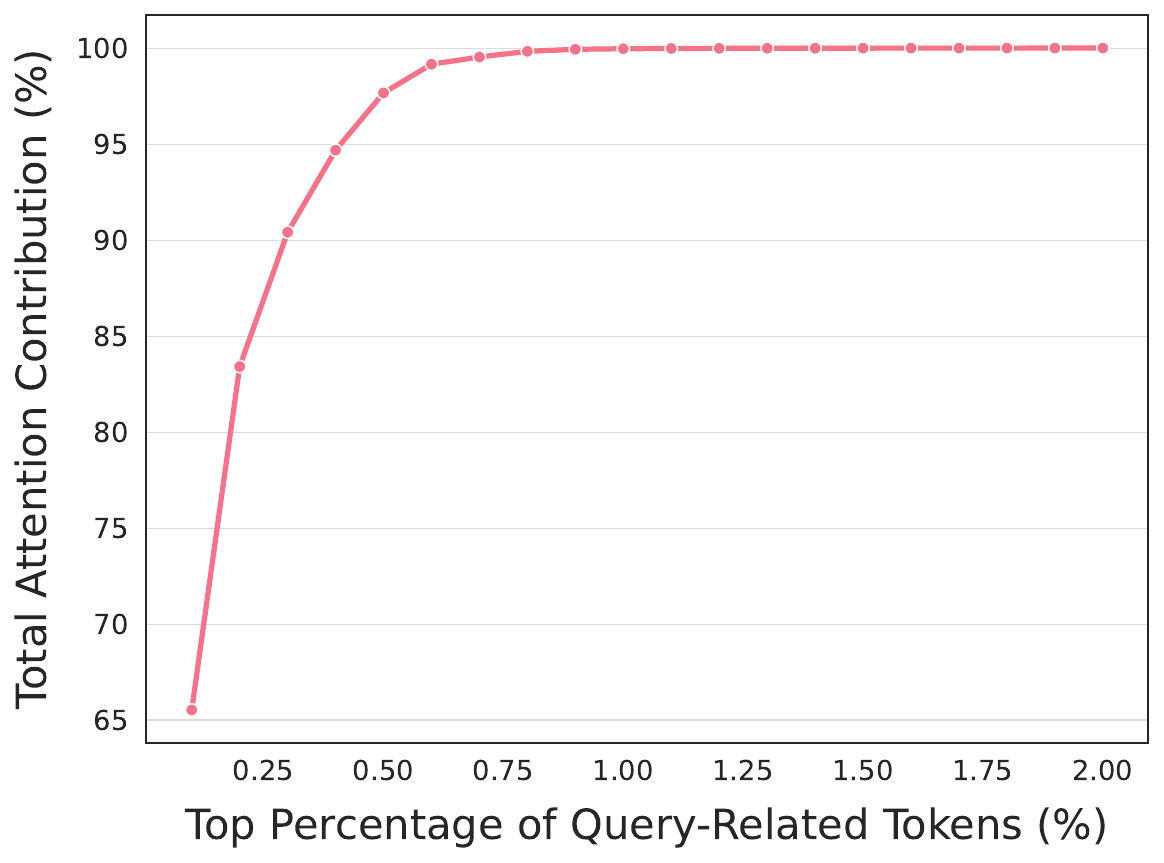}  
        \caption{Attention Weight Contribution.}
        \label{fig:attn_weight}
    \end{subfigure}
    \begin{subfigure}{0.45\textwidth}
        \centering
        \includegraphics[height=4.5cm]{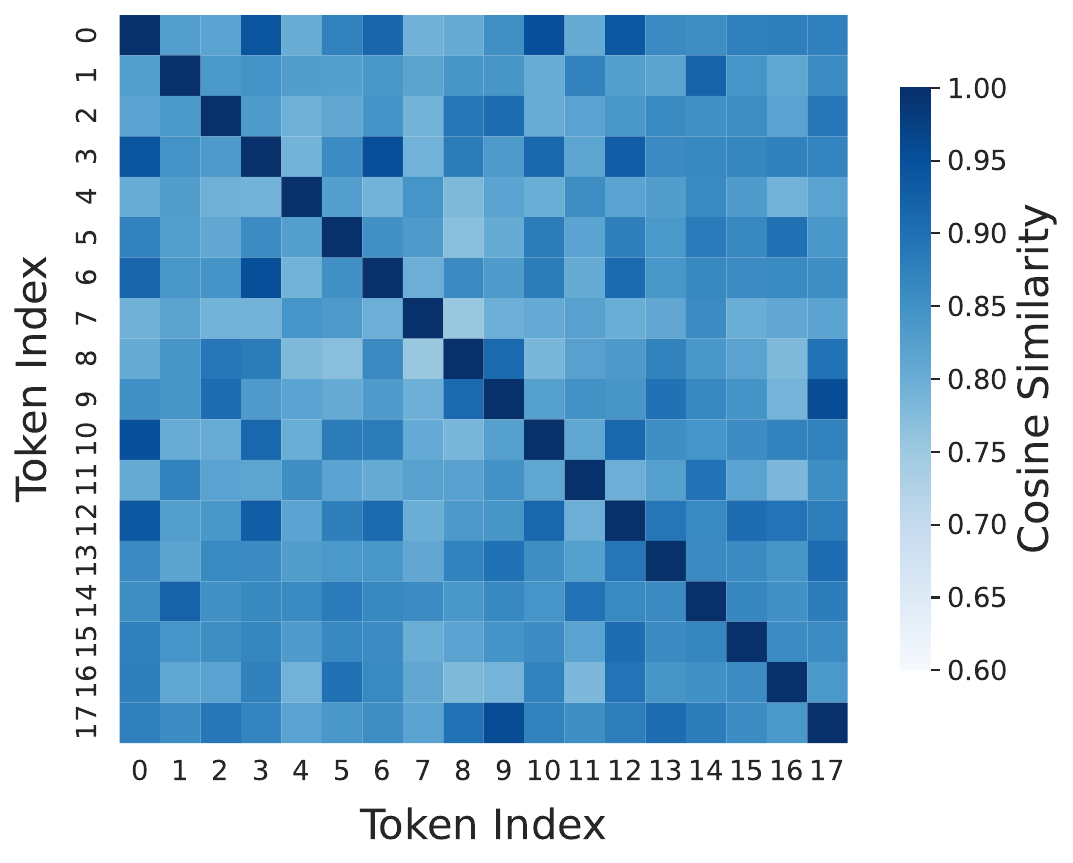}  
        \caption{Top Query-Related Tokens Similarity.}
        \label{fig:sim_map}
    \end{subfigure}
    \caption{Analysis of Attention Distribution and Similarity of Top Query-Related Tokens. (a) Only a small number of tokens related to the query occupy a large proportion of the attention weight allocation; for example, the 0.75\% most relevant tokens occupy 99\% of the attention weights. (b) These query-related tokens are highly similar to each other, with the lowest similarity exceeding 0.6.}
    \label{fig:intro_fig}
\end{figure*}
 

In contrast, task-aware methods generally include query in the compression process, preserving information according to relevance via query-guided strategies such as merging, deletion or summarization. However, existing task-aware methods rely solely on relevance as a criterion for compression, ignoring the inherent redundancy in natural language~\citep{DBLP:journals/bstj/Shannon48}. This leads to the retention of \textit{highly similar} relevant content, with high redundancy (see Figure~\ref{fig:sim_map}). Such over-similarity can mislead the LLM into producing erroneous outputs (relevance does not guarantee correctness)~\citep{DBLP:conf/sigir/YangZR025,wang2025unable}. This issue is especially amplified in long-context scenarios, where different segments carry varying information value and should be compressed with different compression rates~\citep{DBLP:conf/emnlp/JiangWLYQ23,DBLP:conf/acl/JiangWL0L0Q24, DBLP:conf/naacl/TangXLZZHZ25}. Existing dynamic compression rate allocation mechanisms are limited in several ways: they either follow predefined linear rules lacking adaptability~\citep{DBLP:conf/emnlp/JiangWLYQ23,DBLP:conf/acl/JiangWL0L0Q24, DBLP:conf/naacl/TangXLZZHZ25}, rely on model-internal understanding~\citep{DBLP:conf/acl/ChenLXZWHSSTL025} (\textit{i.e.}, task-agnostic) that may ignore query-relevant content, or determine compression rates based solely on relevance~\citep{DBLP:conf/acl/CaoCLPHCS24} (\textit{i.e.}, assigning lower compression to high relevance content). None of these methods account for semantic redundancy among compression units, leading to repeated retention of similar information and compromising both compression effectiveness and information diversity. This naturally raises a research question: \textit{How can we retain query-relevant information while identifying and eliminating semantic redundancy among compressed representations, especially under high compression rates, to jointly optimize relevance and diversity?}
 
To this end, we propose \textbf{COMI} (\textbf{CO}arse-to-fine Context Compression via \textbf{M}arginal \textbf{I}nformation Gain), a coarse-to-fine context compression framework that adaptively balances relevance information preservation and redundancy elimination. We introduce \textit{Marginal Information Gain (MIG)}, which is defined as the relevance of a unit to the query minus its semantic redundancy with other units. This metric jointly captures relevance and semantic uniqueness, guiding the compression process to prioritize information that is both relevant and low redundant. COMI employs a coarse-to-fine compression strategy. In the first stage, \textbf{Coarse-grained Group Reallocation}, the input context is divided into segments of equal length, each treated as a compression group. MIG is computed for inter-group, and the compression rate for each group is dynamically adjusted accordingly: groups with high MIG (\textit{i.e.}, high relevance and low redundancy) are assigned lower compression rates. This enables the compression budget to be adaptively reallocated based on the distribution of information value in the context. In the second stage, \textbf{Fine-Grained Token Merging}, tokens are weighted by their intra-group MIG, and fine-grained semantic fusion is performed accordingly. Tokens with high MIG contribute more to the merged representation, ensuring that key semantic units are preserved while avoiding the accumulation of “relevant but redundant” content. Through this hierarchical compression mechanism, COMI effectively retains high-relevance, low-redundancy information even under high compression rates, ensuring that the final compressed representation is semantically complementary rather than redundancy, thereby increasing information diversity.




Our contributions are three-fold: (1) We introduce Marginal Information Gain (MIG) as a metric for context compression, jointly modeling task relevance and semantic redundancy. This overcomes the limitations of existing relevance-only methods and provides a more discriminative framework for evaluating information value in long context compression. (2) We propose COMI, which employs a coarse-to-fine adaptive compression strategy. At the coarse level, reallocation based on inter-group MIG dynamically adjusts compression rates across different regions. At the fine level, intra-group MIG-guided weighted fusion eliminates group redundancy, preventing the accumulation of similar content. (3) We conduct comprehensive experiments on long-context tasks including Question-Answering (QA), summarization. Experimental results demonstrate that COMI outperforms existing methods by a large margin under high compression rates, \textit{e.g.}, with a 32x compression constraint and using Qwen2-7B as the backbone, COMI improves the Exact Match (EM) score by approximately 25-point over suboptimal baseline on NaturalQuestions.

\section{Preliminary: GMSA}
GMSA~\citep{DBLP:journals/corr/abs-2505-12215} introduces an issue of cross-layer semantic misalignment in the encoder-decoder based framework and addresses it via Layer Semantic Alignment (LSA), which aligns high-level summary vectors with low-level original input semantics, thereby bridging the semantic gap across different layers. Since COMI also relies on an encoder-decoder framework, we similarly employ LSA to achieve cross-layer semantic alignment (see Figure~\ref{fig:training_paradigm}). Following~\citet{DBLP:journals/corr/abs-2505-12215}, we set LSA to a single layer.

\section{Related Work}
\label{apx:rela_work}

\paragraph{Task-Agnostic Context Compression Methods.}
Task-agnostic context compression methods typically do not incorporate the input query and aim to preserve overall semantic information for broad downstream applicability. Main approaches include: (1) Encoder-decoder methods~\citep{DBLP:conf/iclr/00010WWCW24, DBLP:conf/nips/0002W00CWZ024,DBLP:conf/emnlp/TanLPWZK0P24, DBLP:journals/corr/abs-2505-15774,DBLP:conf/acl/LiSC25, DBLP:journals/corr/abs-2407-09252, DBLP:conf/acl/DaiLHZZWXL25,DBLP:journals/corr/abs-2508-15253,DBLP:journals/corr/abs-2505-12215,zhao2025positionidsmatterenhanced,DBLP:journals/corr/abs-2510-08907}, which compress input via an encoder and decode answers from the compact representation; (2) Attention mask modification~\citep{DBLP:conf/nips/Mu0G23, DBLP:journals/corr/abs-2504-08934,DBLP:conf/cvpr/YeGHGT25}, learning compact soft tokens guided by task losses; (3) Autoregressive modeling~\citep{DBLP:conf/emnlp/ChevalierWAC23, DBLP:conf/iclr/Zhang0XSYD25}, treating compression as a sequential generation process conditioned on prior compressed segments; (4) Metric-driven methods using entropy~\citep{DBLP:conf/emnlp/0001DGL23, DBLP:conf/emnlp/JiangWLYQ23} or bidirectional semantics~\citep{DBLP:conf/acl/PanWJXLZLR0LZQ024} to remove less important content; and (5) Summarization-based methods~\citep{DBLP:conf/iclr/XuSC24}, training a task-agnostic summarizer for compression. \textit{Despite preserving general semantics, these methods lack awareness of query-related tokens, inevitably discarding useful information and degrading performance, especially under high compression constraints.}

\paragraph{Task-Aware Context Compression Methods.}
Task-aware context compression methods do not pursue the completeness of semantic over the compressed representation, but focus on retaining information relevant to downstream task. During the compression process, these methods simultaneously receive the original context and a query to merge relevant content~\citep{DBLP:conf/acl/CaoCLPHCS24}, summarize~\citep{DBLP:conf/emnlp/YoonLHJK24,DBLP:conf/acl/HwangCJSHP25} or delete irrelevant content~\citep{DBLP:conf/acl/JiangWL0L0Q24, DBLP:conf/naacl/TangXLZZHZ25,DBLP:conf/aaai/LiskavetsURKEL25,DBLP:journals/corr/abs-2503-10720,DBLP:conf/aaai/ZhaoWX25}. \textit{Although these methods can filter task-relevant content, they implicitly assume conditional independence among the retained tokens, leading to significant redundancy within the preserved tokens and making it easy to mislead LLM into generating erroneous outputs.}

\paragraph{KV-cache Compression.} KV-cache Compression methods compresses KV-caches layer-wise, exploring strategies like inter-layer sharing \citep{DBLP:conf/nips/BrandonMNPR24}, reducing heads \citep{DBLP:journals/corr/abs-1911-02150,DBLP:conf/emnlp/AinslieLJZLS23}, or discarding less important KVs~\citep{DBLP:conf/iclr/XiaoTCHL24,DBLP:conf/nips/LiHYVLYCLC24,DBLP:conf/iclr/Zhang0XSYD25}). \textit{The Key limitations on these methods include requiring identical compression and response models, and necessitating model-specific modifications for different KV-cache structures.}


\begin{figure*}[htb]
    \centering
    \includegraphics[width=0.8\linewidth]{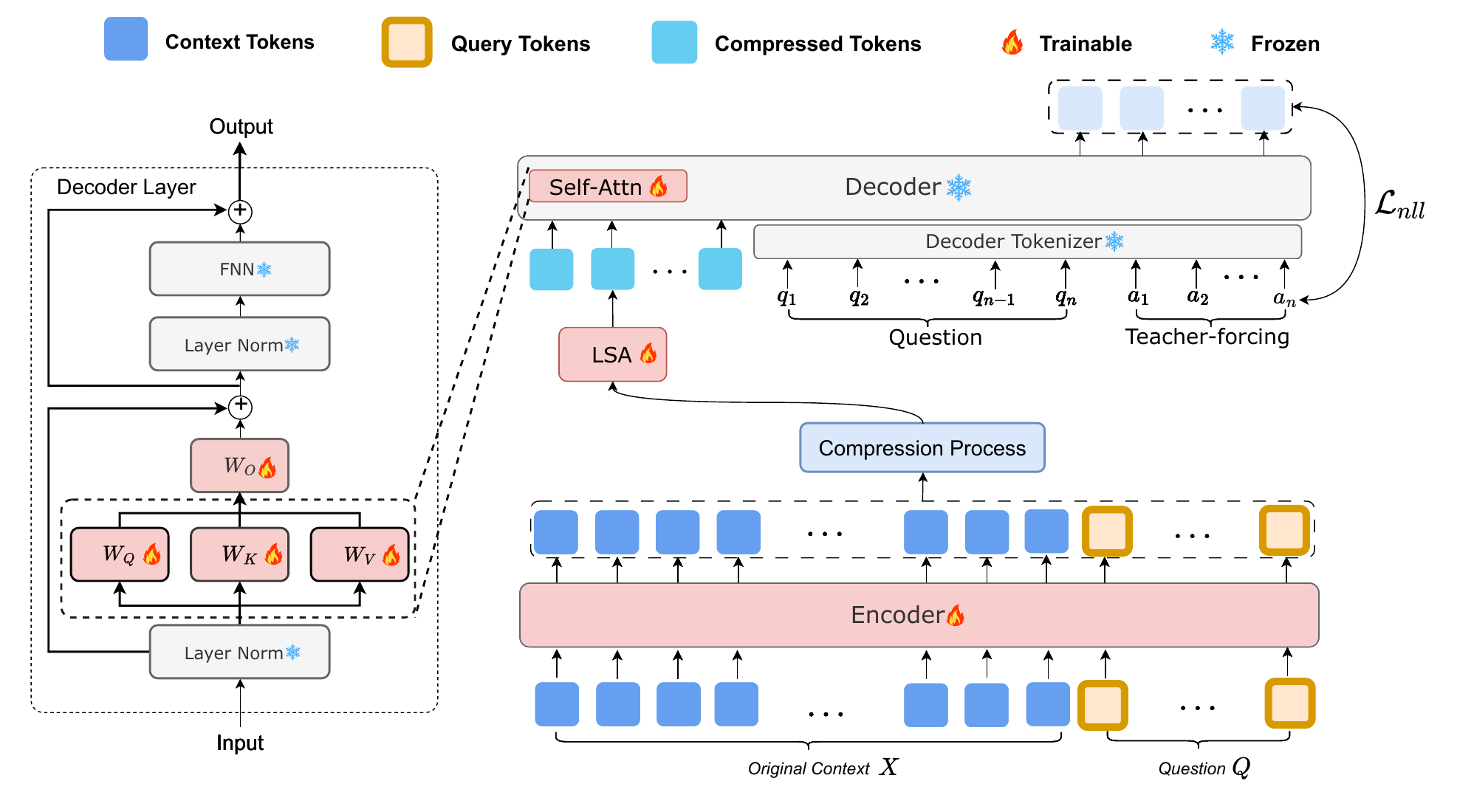}
    \caption{The Training Paradigm of COMI. COMI is based on an encoder-decoder architecture. The original context $X$ and query $Q$ are first encoded into hidden states, which are then compressed via a compression process (see Figure~\ref{fig:comi}). The compressed representation is decoded and trained using cross-entropy loss. During training, the encoder and LSA are fully fine-tuned, while the decoder is partially fine-tuned, updating only the $W_Q$, $W_K$, $W_V$, and $W_O$ matrices in each layer.}
    \label{fig:training_paradigm}
\end{figure*}

\begin{figure*}[htb]
    \centering
    \includegraphics[width=1\linewidth]{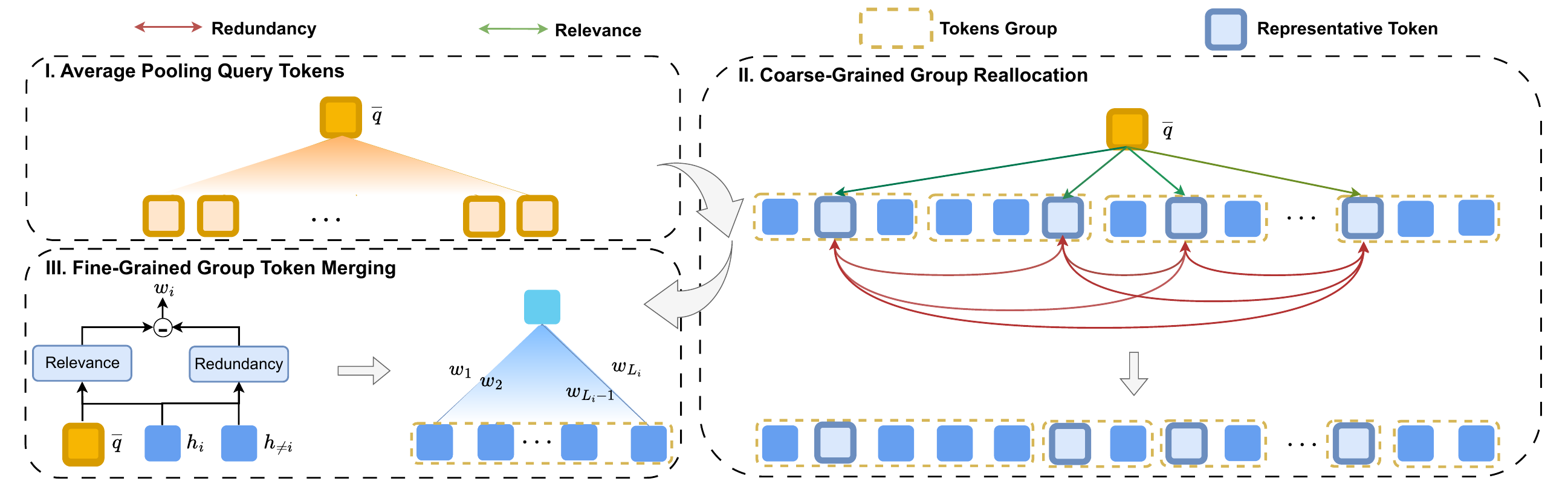}
    \caption{The Compression Process of COMI. Specifically, it sequentially performs three steps: \textbf{I. Average Pooling of Query Tokens.} Obtain a single query vector via average pooling; \textbf{II. Coarse-Grained Group Reallocation.} Reallocate the sizes of compression groups based on inter-group Marginal Information Gain (MIG) (\textit{i.e.}, groups with higher MIG are assigned lower compression rates); \textbf{III. Fine-Grained Token Merging.} Compute the intra-group MIG for each token and merge all tokens within a group into a single compressed token according to weights $w_1, ..., w_{L_{i} - 1}$.}
    \label{fig:comi}
\end{figure*}

\section{COMI}

In this section, we elaborate on COMI, a coarse-to-fine context compression framework based on an encoder-decoder architecture. COMI simultaneously models the relevance of each compression unit to the input question and the redundancy among units via the Marginal Information Gain (MIG). It then performs Coarse-grained Group Reallocation followed by Fine-grained Group Token Merging to retain information that is both relevant and low redundant.
\subsection{Marginal Information Gain}
For a given token $x_i$, query vector $q$, and the context $X$ to which $x_i$ belongs, we compute its Marginal Information Gain (MIG) $G(x_i, q, X)$ as follows:
\begin{equation}
G(x_i, q,X) = \frac{x_i^\top \cdot q}{\| x_i \| \| q \|} 
      - \max_{x_j \in X, j \neq i}\left(\frac{x_i^\top \cdot x_{j}}{\| x_i \| \| x_{j} \|}\right) 
\label{eq:mig}
\end{equation}
Here, the first term measures the cosine similarity between $x_i$ and the query $q$, representing its relevance; the second term captures the maximum cosine similarity between $x_i$ and any other token in $X$, reflecting its redundancy. We demonstrate that using MIG yields superior expected performance compared to relying solely on relevance. (See Appendix~\ref{apx:demo})


\subsection{Coarse-grained Group Reallocation}
For a given original context $X$, we first encode it via a language model (\textit{i.e.}, encoder) to extract semantic information.
\begin{equation}
    H = \texttt{Encoder}(X) \, ,
\end{equation}
where $H$ is the last hidden state.

Then, we divide $H$ into $m$ \textit{equal-length}, non-overlapping segments $H = \{S_1, S_2, \dots, S_m\}$. Given an input query $Q = \{q_1, q_2, \dots, q_n\}$, different segments exhibit varying degrees of relevance to $Q$ as well as differing levels of internal redundancy. Therefore, a fixed compression rate across all segments is suboptimal. Instead, we propose a dynamic compression rate reallocation strategy based on marginal information gain.

First, we average pooling the query sequence $Q$ into a single query vector $\overline{q}$:
\begin{equation}
    \overline{q} = \frac{1}{|Q|} \sum_{q_k \in Q} q_k.
\end{equation}

Then we select the representative vector $\hat{h_i}$ of $S_i$ that exhibits the highest relevance to the query $\overline{q}$ via:
\begin{equation}
    \hat{h_i} = \underset{h_j \in S_i}{\mathrm{argmax}} \left( \frac{h_j^\top \cdot \overline{q}}{\| h_j \| \| \overline{q} \|} \right).
\end{equation}

Then, we can compute the marginal information gain $G(\hat{h_i}, \overline{q}, H)$ for each segment $S_i$ as defined in Equation (1), which now evaluates the trade-off \textit{between} the segment's relevance to the query and its redundancy with neighboring segments.

Since segments with higher marginal gain are more informative and less redundant, they should be preserved more faithfully, \textit{i.e.}, assigned a smaller compression rate. To achieve this, we apply an inverse transformation to reverse the MIG ranking. We
cant get the allocation weights via:
\begin{equation}
    P_i = \frac{e^{-G(\hat{h_i}, \overline{q}, H)}}{\sum_{j=1}^{n} e^{-G(\hat{h_j}, \overline{q}, H)}}, \quad i = 1, 2, \dots, n,
\end{equation}
where $\sum_{i=1}^{n} P_i = 1$. These weights $P_i$ determine the proportion of the total allowed output length allocated to each segment.

Finally, the target length (\textit{i.e.}, number of tokens) for the compressed representation of segment $S_i$, denoted $L_i$, is computed as:
\begin{equation}
    L_i = L_{\text{org}} \cdot P_i,
\end{equation}
where $L_{\text{org}}$ is the length of the original input sequence. 

This dynamic reallocation mechanism enables COMI to efficiently allocate compression resources based on semantic importance and redundancy, preserving segments with higher marginal gain adaptively and improving downstream task performance under limited context length.


\subsection{Fine-Grained Token Merging}


After dynamically reallocating the target lengths $L_i$ for each compression segment $S_i$, we proceed to perform token merging within each segment to achieve compression. The goal is to preserve maximal informative content with respect to the query while minimizing redundancy among tokens.

For each segment $S_i = \{h_1, h_2, \dots, h_{|S_i|}\}$, we compute the marginal information gain $G(h_k, S_i)$ for every token $x_k \in S_i$ using the same formulation as in Equation~(\ref{eq:mig}). This score reflects a trade-off between the token's relevance to the query (via cosine similarity with $\overline{q}$) and its redundancy with the most similar token \textit{within} the same segment.

To merge all tokens in $S_i$ into a single output token $\tilde{h}_i$ that preserves the most informative content, we perform MIG-weighted merging via:
\begin{equation}
    \tilde{h}_i = \sum_{h_k \in S_i}\frac{ e^{G(h_k, \overline{q},S_i)} \cdot h_k}{\sum_{h_k \in S_i} e^{G(h_k, \overline{q}, S_i)}}.
\end{equation}
That is, each token in the corresponding group is weighted by the softmax of its MIG, ensuring that tokens with higher information gain contribute more to the merged compressed representation $\tilde{h}_i$. 

We can get the total compressed representation $\tilde{X}$ by:
\begin{equation}
    \tilde{X} = \{ \tilde{h}_1 \odot \tilde{h}_2 \odot ... \odot  \tilde{h}_m \} \, ,
\end{equation}
where $\odot$ denotes concatenation.

\subsection{Training Objective} 

We fine-tune COMI following the joint instruction-tuning approach~\citep{DBLP:conf/iclr/Lin0CSL00KSLZY24}, enabling it to generate correct outputs given a query and the original context. The difference lies in that we perform fine-tuning based on compressed representations, simultaneously training the encoder, LSA (to ensure the effectiveness of the compressed representations), and the decoder’s $W_Q$, $W_K$, $W_V$, and $W_O$ (to ensure knowledge extraction capability of the $\texttt{Decoder}$) (see Figure~\ref{fig:training_paradigm}).

\begin{equation}
    \mathcal{L}_{nll}=-\sum_{i=1}^{L_{a}} \log p_{\phi}\left(a_{i} \mid \texttt{LSA(\(\widetilde{X}\))}, q_1,q_2,...,q_n,a_{<i}\right) \, ,
\end{equation}
where $L_{a}$ refers to the length of ground truth; $\texttt{LSA(\(\cdot\))}$ denotes the LSA module; $p_{\phi}(\cdot)$ is the $\texttt{Decoder}$ probability distribution obtained after the softmax function, and $a_i$ denotes the $i$-th token in the predicted answer. 

\section{Experiments}

\begin{table}[ht!]
\centering
\caption{Experimental results on four QA benchmarks and the MultiNews summarization dataset. We \textbf{bold} the optimal and \underline{underline} the suboptimal of baselines. \textbf{EM} refers to Exact Match and \textbf{F1} refers to the F1 score. \textbf{Closed-book} indicates using only the input question as the input, while \textbf{Original Prompt} indicates using all retrieved documents as the input.}
\small
\begin{adjustbox}{width=\textwidth,center}
\begin{tabular}{ l | c c c c c c c c c }
\toprule
\multirow{2}{*}{\textbf{Methods}} & 
  \multicolumn{2}{c}{\textbf{NaturalQuestions}} & 
  \multicolumn{2}{c}{\textbf{2WikiMQA}} & 
  \multicolumn{2}{c}{\textbf{HotpotQA}} & 
  \multicolumn{2}{c}{\textbf{NarrativeQA}} &
  \textbf{MultiNews} \\
\cmidrule(lr){2-3} \cmidrule(lr){4-5} \cmidrule(lr){6-7} \cmidrule(lr){8-9} \cmidrule(lr){10-10}
& \textbf{EM} & \textbf{F1} & \textbf{EM} & \textbf{F1} & \textbf{EM} & \textbf{F1} & \textbf{EM} & \textbf{F1} & \textbf{F1} \\
\midrule
\multicolumn{10}{l}{\textit{\textbf{LLaMA-2-7B-Chat}}} \\
Closed-book & 20.21 & 21.89 & 24.92 & 27.81 & 17.22 & 24.03 & 1.2 & 10.78 & - \\
Original Prompt & 15.04 & 26.74 & 30.82 & 37.80 & 34.34 & 47.50 & 2.16 & 13.56 & 33.2 \\
\midrule
\multicolumn{10}{l}{\textit{\textbf{Qwen2-7B-Instruct}}} \\
Closed-book & 22.64 & 20.15 & 30.07 & 20.97 & 2.82 & 10.18 & 2.82 & 10.18 & - \\
Original Prompt & 72.35 & 38.01 & 59.78 & 47.52 & 64.17 & 53.68 & 24.53 & 9.74 & 31.42 \\
\midrule
\multicolumn{10}{c}{\textit{16x compression constraint}} \\
\midrule
\multicolumn{10}{l}{\textit{\textbf{LLaMA-2-7B-Chat}}} \\
StreamLLM & 8.06 & 18.72 & 13.42 & 15.92 & 14.45 & 21.65 & 0.00 & 0.00 & 24.66 \\
SnapKV & 11.90 & 26.49 & 25.82 & 29.55 & \underline{26.09} & \underline{37.87} & 0.00 & 0.00 & 19.79 \\
Activation Beacon & 4.56 & 16.58 & 2.20 & 13.84 & 11.41 & 24.68 & \underline{6.39} & \textbf{21.16} & \underline{26.36} \\
ICAE & 1.50 & 5.93 & 12.92 & 21.52 & 8.24 & 16.90 & 0.00 & 3.86 & 20.85 \\
LongLLMLingua & 11.64 & 23.12 & \underline{28.42} & \underline{32.05} & 24.75 & 35.22 & 2.44 & 15.91 & 20.66 \\
LLMLingua-2-large & \underline{17.74} & \textbf{27.91} & 24.26 & 26.60 & 15.71 & 25.95 & 4.51 & 17.38 & 20.83 \\
GMSA & 10.36 & 15.63 & 22.68 & 28.46 & 16.09 & 25.43 & 1.13 & 9.60 & 21.10 \\
\midrule
{\cellcolor[rgb]{0.925,0.957,1}}\textbf{COMI} & {\cellcolor[rgb]{0.925,0.957,1}}\textbf{22.75} & {\cellcolor[rgb]{0.925,0.957,1}}\underline{27.32} & {\cellcolor[rgb]{0.925,0.957,1}}\textbf{41.43} & {\cellcolor[rgb]{0.925,0.957,1}}\textbf{48.74} & {\cellcolor[rgb]{0.925,0.957,1}}\textbf{36.60} & {\cellcolor[rgb]{0.925,0.957,1}}\textbf{49.50} & {\cellcolor[rgb]{0.925,0.957,1}}\textbf{8.31} & {\cellcolor[rgb]{0.925,0.957,1}}\underline{20.32} & {\cellcolor[rgb]{0.925,0.957,1}}\textbf{27.30} \\
\midrule
\multicolumn{10}{l}{\textit{\textbf{Qwen2-7B-Instruct}}} \\
SnapKV & 6.06  & 19.64 & 8.09  & 25.52 & 13.61 & 30.84 & 0.00 & 11.12 & 25.56 \\
Activation Beacon & 11.67 & 27.53 & \underline{36.53} & \underline{49.11} & \underline{39.73} & \underline{55.25} & 6.63 & \underline{24.07} & \underline{33.60} \\
LongLLMLingua & 15.07 & \underline{29.64} & 21.80 & 29.63 & 21.58 & 35.36 & 1.03 & 11.99 & 23.16 \\
LLMLingua-2-large & 10.36 & 20.31 & 18.26 & 25.35 & 2.44 & 11.28 & \underline{8.27} & 11.28 & 21.39 \\
GMSA & \underline{24.41} & 28.27 & 29.49 & 34.23 & 21.54 & 30.74 & 1.88 & 10.94 & 26.55 \\
\midrule
{\cellcolor[rgb]{0.925,0.957,1}}\textbf{COMI} & {\cellcolor[rgb]{0.925,0.957,1}}\textbf{56.31} & {\cellcolor[rgb]{0.925,0.957,1}}\textbf{55.52} & {\cellcolor[rgb]{0.925,0.957,1}}\textbf{52.13} & {\cellcolor[rgb]{0.925,0.957,1}}\textbf{60.43} & {\cellcolor[rgb]{0.925,0.957,1}}\textbf{45.11} & {\cellcolor[rgb]{0.925,0.957,1}}\textbf{60.52} & {\cellcolor[rgb]{0.925,0.957,1}}\textbf{13.25} & {\cellcolor[rgb]{0.925,0.957,1}}\textbf{24.69} & {\cellcolor[rgb]{0.925,0.957,1}}\textbf{35.85} \\
\midrule
\multicolumn{10}{c}{\textit{32x compression constraint}} \\
\midrule
\multicolumn{10}{l}{\textit{\textbf{LLaMA-2-7B-Chat}}} \\
StreamLLM & 4.10 & 11.86 & 9.76 & 12.07 & 8.11 & 12.97 & 0.00 & 0.00 & 19.28 \\
SnapKV & 9.91  & \textbf{24.76} & 1.53  & 2.98  & 8.25  & 13.30 & 0.00 & 0.00 & 14.44 \\
Activation Beacon & 1.47 & 11.76 & 1.71 & 13.38 & 13.94 & 26.75 & \underline{4.79} & \textbf{17.02} & \underline{24.35} \\
ICAE & 1.31 & 10.82 & 12.91 & 21.68 & 8.20 & 16.95 & 0.00 & 3.16 & 19.00 \\
LongLLMLingua & 7.50 & 18.86 & \underline{28.32} & \underline{31.76} & \underline{25.12} & \underline{34.78} & 2.44 & 15.10 & 18.56 \\
LLMLingua-2-large & \underline{14.09} & 22.76 & 21.98 & 23.89 & 14.02 & 23.33 & 3.57 & 15.50 & 16.46 \\
GMSA & 9.53 & 13.94 & 24.83 & 28.62 & 17.93 & 27.15 & 1.13 & 8.56 & 19.60 \\
\midrule
{\cellcolor[rgb]{0.925,0.957,1}}\textbf{COMI} & {\cellcolor[rgb]{0.925,0.957,1}}\textbf{16.99} & {\cellcolor[rgb]{0.925,0.957,1}}\underline{23.94} & {\cellcolor[rgb]{0.925,0.957,1}}\textbf{37.34} & {\cellcolor[rgb]{0.925,0.957,1}}\textbf{43.85} & {\cellcolor[rgb]{0.925,0.957,1}}\textbf{31.46} & {\cellcolor[rgb]{0.925,0.957,1}}\textbf{43.32} & {\cellcolor[rgb]{0.925,0.957,1}}\textbf{6.83} & {\cellcolor[rgb]{0.925,0.957,1}}\underline{15.90} & {\cellcolor[rgb]{0.925,0.957,1}}\textbf{24.71} \\
\midrule
\multicolumn{10}{l}{\textit{\textbf{Qwen2-7B-Instruct}}} \\
SnapKV & 5.76  & 17.93 & 7.16  & 21.36 & 10.57 & 25.70 & 0.00 & 10.98 & 21.35 \\
Activation Beacon & 8.52 & 22.26 & \underline{38.79} & \underline{49.91} & \underline{37.77} & \underline{52.76} & \underline{5.18} & \underline{21.29} & \underline{32.38} \\
LongLLMLingua & 13.37 & 27.39 & 21.19 & 28.91 & 22.59 & 35.45 & 1.03 & 11.91 & 21.60 \\
LLMLingua-2-large & 9.60 & 18.16 & 18.81 & 24.90 & 2.26 & 10.27 & 2.26 & 10.27 & 17.70 \\
GMSA & \underline{24.63} & \underline{27.81} & 28.59 & 32.81 & 21.28 & 30.08 & 2.73 & 11.04 & 25.49 \\
\midrule
{\cellcolor[rgb]{0.925,0.957,1}}\textbf{COMI} & {\cellcolor[rgb]{0.925,0.957,1}}\textbf{49.15} & {\cellcolor[rgb]{0.925,0.957,1}}\textbf{49.59} & {\cellcolor[rgb]{0.925,0.957,1}}\textbf{48.89} & {\cellcolor[rgb]{0.925,0.957,1}}\textbf{57.07} & {\cellcolor[rgb]{0.925,0.957,1}}\textbf{40.46} & {\cellcolor[rgb]{0.925,0.957,1}}\textbf{55.24} & {\cellcolor[rgb]{0.925,0.957,1}}\textbf{11.18} & {\cellcolor[rgb]{0.925,0.957,1}}\textbf{22.30} & {\cellcolor[rgb]{0.925,0.957,1}}\textbf{33.60} \\
\bottomrule
\end{tabular}
\end{adjustbox}
\vspace{-4mm}
\label{tab:main-exp}
\end{table}


\begin{figure*}[thbp]
    \centering
    \begin{subfigure}[t]{0.45\linewidth}  
        \centering
        \includegraphics[width=\linewidth]{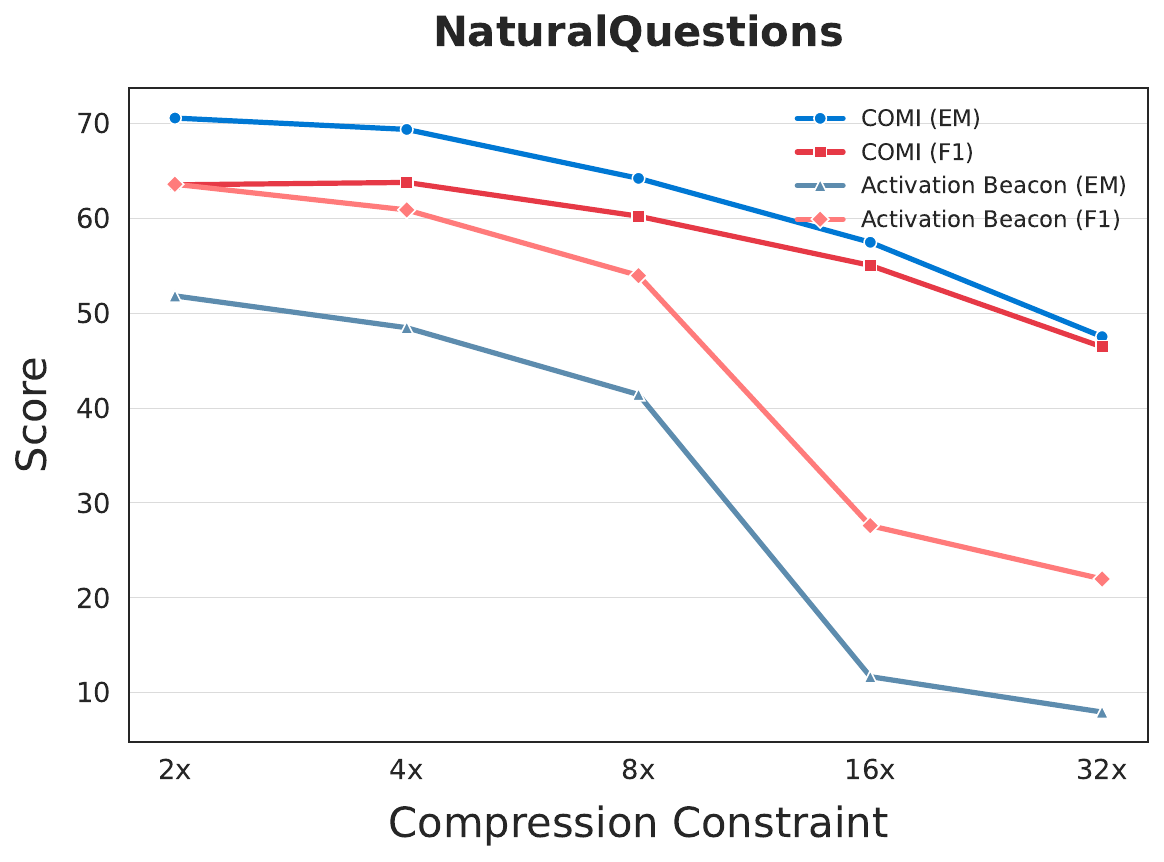}  
        \caption{Pressure test on NaturalQuestions.}
        \label{fig:beacon_pre}
    \end{subfigure}
    \hfill  
    \begin{subfigure}[t]{0.45\linewidth}
        \centering
        \includegraphics[width=\linewidth]{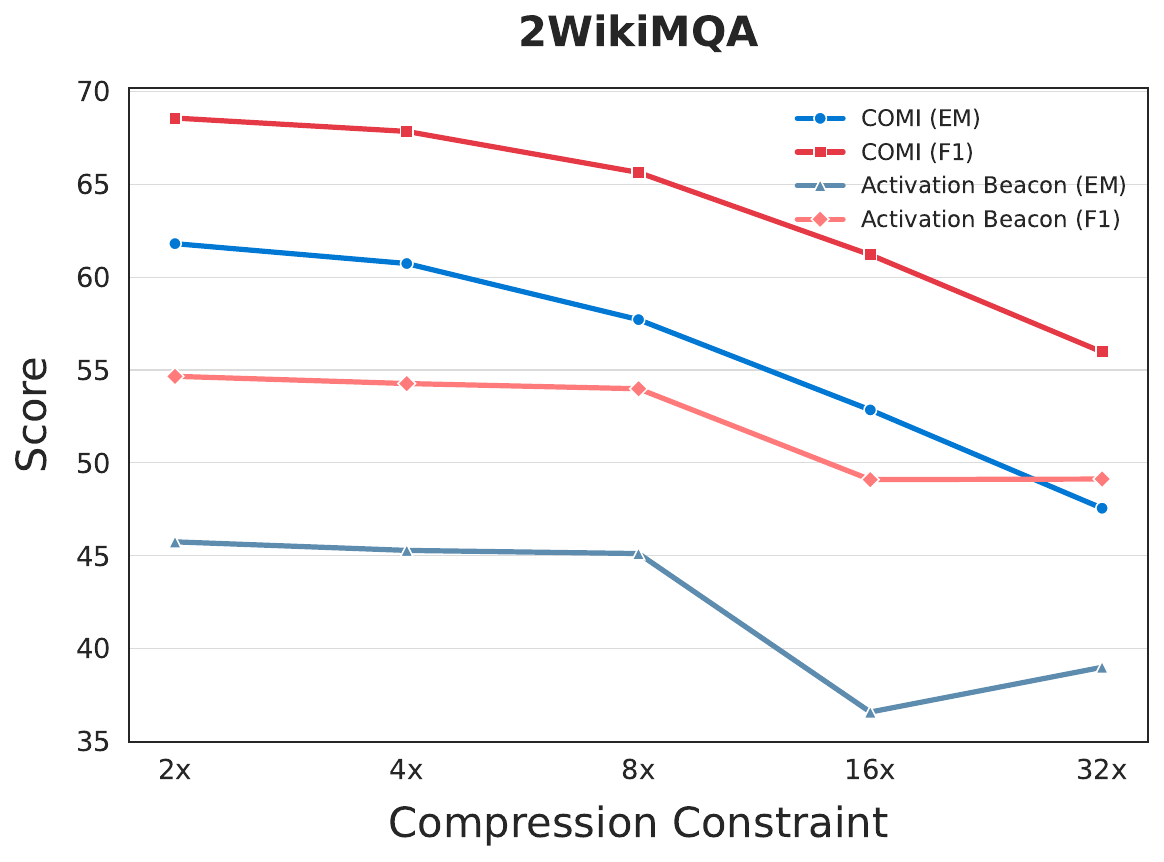}
        \caption{Pressure test on 2WikiMQA.}
        \label{fig:comi_pre}
    \end{subfigure}
    \caption{Compression Pressure Test on NaturalQuestions and 2WikiMQA. As the compression constraint increases, although both COMI and Activation Beacon generally show a downward trend in EM and F1, COMI consistently remains higher than Activation Beacon.}
    \label{fig:pre}
\end{figure*}


In this section, we seek to answer the following four research questions: (1) How does COMI perform on various tasks? (RQ1) (2) What is the impact of compression constraint on COMI's performance? (RQ2) (3) How effective is each individual component within COMI? (RQ3) (4) How does COMI impact native long context LLMs?
\subsection{Settings}
\textbf{Training.} COMI requires \textit{only a single training run} to be effectively applied to multiple downstream tasks (\textit{i.e.}, QA, summarization). We sampled 20,000 examples each from NaturalQuestions~\citep{DBLP:journals/tacl/LiuLHPBPL24}, HotpotQA~\citep{DBLP:conf/emnlp/Yang0ZBCSM18}, 2WikiMQA~\citep{DBLP:conf/coling/HoNSA20}, NarrativeQA~\citep{DBLP:journals/tacl/KociskySBDHMG18}, and MultiNews~\citep{DBLP:conf/acl/FabbriLSLR19} to form the final training set. All training and testing samples have token lengths no longer than 32K. During training, the batch size is set to 64, the learning rate is set to 1e-5, and a linear decay schedule is employed. The training paradigm is illustrated in Figure~\ref{fig:training_paradigm}. During training, we randomly sample a compression rate (\textit{e.g.}, 16x or 32x) for each training sample. To ensure fair comparison, we train GMSA in the same datasets and settings.

\textbf{Implementation.} Our implementation is based on LLaMA-2-7B (Chat) and Qwen2-7B (Instruct). To ensure a fair comparison, all baseline results are re-implemented using official open-source code. All experiments are conducted using the Hugging Face framework on 8 NVIDIA H20 (94GB) GPUs.

\textbf{Evaluation Metrics.} For question answering (\textit{i.e.}, NaturalQuestions, HotpotQA, 2WikiMQA, and NarrativeQA), we report both Exact Match (EM)~\citep{DBLP:conf/nips/LewisPPPKGKLYR020} and F1 score~\citep{DBLP:conf/emnlp/Yang0ZBCSM18}. Summarization performance on MultiNews is measured by F1 score.

\textbf{Baselines.} We conduct comprehensive comparisons with various methods in both context compression and KV-cache compression, including hard prompt compression methods (\textit{i.e.}, LongLLMLingua~\citep{DBLP:conf/acl/JiangWL0L0Q24}, LLMLingua-2-large~\citep{DBLP:conf/acl/PanWJXLZLR0LZQ024}), soft prompt compression methods (\textit{i.e.}, ICAE~\citep{DBLP:conf/iclr/00010WWCW24}, GMSA~\citep{DBLP:journals/corr/abs-2505-12215}), and KV-cache compression methods (\textit{i.e.}, StreamLLM~\citep{DBLP:conf/iclr/XiaoTCHL24}, SnapKV~\citep{DBLP:conf/nips/LiHYVLYCLC24}, Activation Beacon~\citep{DBLP:conf/iclr/Zhang0XSYD25}).

\subsection{Main Result}

For RQ1, COMI demonstrates superior performance on both question answering (QA) and summarization tasks (see Table~\ref{tab:main-exp}). On QA tasks, where input questions are typically specific and relate to a particular piece of information within a long context, COMI nearly achieves state-of-the-art results across all compression constraints and evaluation metrics. This holds true for single-hop questions (\textit{i.e.}, NaturalQuestions), multi-hop questions (\textit{i.e.}, HotpotQA, 2WikiMQA), and QA on extremely long texts (\textit{i.e.}, NarrativeQA, which we test on samples with a maximum length of 32K). For instance, with a compression rate of 32x and using Qwen2-7B-Instruct as the backbone, COMI improves the Exact Match (EM) score by approximately 25 over suboptimal baseline. This highlights COMI's exceptional performance under high compression rates. The EM metric reflects the upper bound of a model's answer, while the F1 score additionally requires the model's output length to be as close to the ground truth as possible. The strong performance on both metrics indicates that COMI not only answers questions more accurately but also produces outputs that are consistent in length with the reference answers. On summarization task, the input query is often a global request (\textit{e.g.}, ``Please summarize the preceding text"). This setting requires the model to preserve and understand global information. COMI's leading performance on these tasks demonstrates its strong comprehension capabilities even without explicit relevant information, confirming its high robustness.



For RQ2, as shown in Figure~\ref{fig:pre}, with the compression ratio gradually increasing (from 2x to 32x), while both COMI and Activation Beacon generally show a downward trend, COMI significantly outperforms Activation Beacon (especially in the EM metric, \textit{e.g.}, nearly 40 point higher on the NaturalQuestions dataset under the 32x compression constraint). EM requires the generated answer to perfectly match the ground truth, indicating that compared to Activation Beacon, COMI effectively retains key information (directly related to the ground truth) across various compression rates.


\subsection{Ablation Study}


\begin{wraptable}{r}{0.75\linewidth}
\centering
\caption{Ablation study analysis on NaturalQuestions, 2WikiMQA under 32x compression constraint using Qwen2-7B.}
\small
\begin{tabular}{ l | c c c c}
\toprule
\multirow{2}{*}{\textbf{Methods}} & 
  \multicolumn{2}{c}{\textbf{NaturalQuestions}} & 
  \multicolumn{2}{c}{\textbf{2WikiMQA}} \\
\cmidrule(lr){2-3} \cmidrule(lr){4-5}
& \textbf{EM} & \textbf{F1} & \textbf{EM} & \textbf{F1} \\
\midrule
{\cellcolor[rgb]{0.925,0.957,1}}\textbf{Default} & {\cellcolor[rgb]{0.925,0.957,1}}\textbf{56.31} & {\cellcolor[rgb]{0.925,0.957,1}}\textbf{55.52} & {\cellcolor[rgb]{0.925,0.957,1}}\textbf{52.13} & {\cellcolor[rgb]{0.925,0.957,1}}\textbf{60.43} \\
\midrule
w/o Coarse-Grained Group Reallocation & 54.54 & 54.26 & 49.82 & 58.06 \\
w/o Fine-Grained Tokens Merging & 50.81 & 50.82 & 47.88 & 56.45 \\
w/o Coarse-Grained level Redundancy & 54.59 & 52.81 & 50.14 & 58.64 \\
w/o Fine-Grained level Redundancy & 51.53 & 52.89 & 49.50 & 57.87 \\
\bottomrule
\end{tabular}
\label{tab:ablation}
\end{wraptable}




To address RQ3, we conducted four ablation studies to examine how each component in COMI contributes to its performance (see Table~\ref{tab:ablation}): (1) Ours w/o Coarse-Grained Group Reallocation replaces dynamic grouping with a uniform partition, so every token group is compressed at the same rate. (2) Ours w/o Fine-Grained Tokens Merging substitutes our information-gain-based weighting with plain average pooling. (3) Ours w/o Coarse-Grained level Redundancy reallocates groups based solely on relevance, ignoring inter-group redundancy. (4) Ours w/o Fine-Grained level Redundancy performs token merging without considering redundancy among tokens within the same group.

Removing any component causes a clear drop in all metrics, demostrating the necessity and effectiveness of each one. Eliminating Coarse-Grained Group Reallocation misallocates the compression budget and risks losing key information, whereas dropping Fine-Grained Token Merging deprives COMI of its fine-grained intra-group sensitivity and dilutes key details (both harming overall results). Disregarding redundancy at either the coarse-grained or fine-grained stage retains redundant information, weakening the compressed representation, making it harder for the model to learn, and ultimately degrading performance.


\subsection{Efficiency Analysis}
In this section, we analyze the computational efficiency of the COMI. By compressing the original context through coarse-grained group reallocation and fine-grained token merging, COMI significantly reduces the sequence length processed during generation, thereby lowering inference cost. The overall process consists of two main stages: compression and generation, whose floating-point operations (FLOPs) are modeled separately.

\begin{table}[h]
\centering
\footnotesize
\caption{Latency Evaluation (seconds) under 32$\times$ compression constraint.}
\label{tab:eval_lat}
\begin{tabular}{l|ccc}
\toprule
\textbf{Method} & \textbf{Compression} & \textbf{Generation} & \textbf{End-to-End} \\
\midrule
\multicolumn{4}{c}{\textbf{NarrativeQA}} \\
\midrule
Original Prompt & - & - & 7.04 \\
SnapKV & - & - & 2.10 \\
Activation Beacon & - & - & 3.93 \\
LongLLMLingua & 68.84 & 2.12 & 70.95 \\
GMSA & \textbf{1.57} & \underline{0.53} & \textbf{2.10} \\
\midrule
{\cellcolor[rgb]{0.925,0.957,1}}\textbf{COMI} & {\cellcolor[rgb]{0.925,0.957,1}}\underline{2.76} & {\cellcolor[rgb]{0.925,0.957,1}}\textbf{0.50} & {\cellcolor[rgb]{0.925,0.957,1}}\underline{3.27} \\
\midrule
\multicolumn{4}{c}{\textbf{MultiNews}} \\
\midrule
Original Prompt & - & - & 8.58 \\
SnapKV & - & - & 5.03 \\
Activation Beacon & - & - & 7.94 \\
LongLLMLingua & 5.44 & 2.81 & 8.25 \\
GMSA & \textbf{0.52} & \textbf{3.34} & \textbf{3.86} \\
\midrule
{\cellcolor[rgb]{0.925,0.957,1}}\textbf{COMI} & {\cellcolor[rgb]{0.925,0.957,1}}\underline{0.66} & {\cellcolor[rgb]{0.925,0.957,1}}\underline{3.43} & {\cellcolor[rgb]{0.925,0.957,1}}\underline{4.09} \\
\bottomrule
\end{tabular}
\end{table}

The compression stage consists of two parts: first, the original context is partitioned into groups, and each group's size is dynamically adjusted based on query relevance and inter-group redundancy; second, tokens within each group are pooled via weighted to generate compressed representations. 

Let $L_q$ the query length, and $L_g = \left\lceil L_{org} / r \right\rceil$ the compressed length, where $r$ is the compression rate. The FLOPs for the compression stage can be expressed as:
$$
\mathrm{FLOPs}^{comp} = F^{\mathrm{GroupRealloc}}(L_{org}) + F^{\mathrm{Pooling}}(L_{org}, L_c) + F^{\mathrm{LSA}}(L_c)\, ,
$$

where $L_c$ refers to compressed context length; $F^{\mathrm{GroupRealloc}}(L)$ denotes the group reallocation cost (complexity $O(N_g \log N_g + N_g^2)$, with $L_g \ll {L_{org}}^2$); $F^{\mathrm{merging}}(r, r)$ denotes the merging cost (complexity $O(r^2)$, with $r \ll {L_{org}}^2$). Due to their small input scales, both operations incur only lightweight overhead.

For the generation stage, assuming the answer length is $L_a$, $L_a$ forward passes are required. The FLOPs for the $i$-th forward pass depend on the input sequence length, \textit{i.e.}, $L_c$ and query length $L_q$. Thus, the FLOPs for each forward pass are given by:
$$
\mathrm{FLOPs}^{forward}_{i} = F^{\mathrm{Decoder}}(L_c, L_q, i) \, ,
$$

Combining both compression and generation stages, the total FLOPs is:
$$
\mathrm{FLOPs} = \sum_{i=1}^{L_a} \mathrm{FLOPs}_i^{\text{forward}} + \mathrm{FLOPs}^{comp} \, .
$$


Whether in question-answering (QA) tasks or generation-centric summarization task, COMI achieves more than a 2$\times$ end-to-end speedup over the Original Prompt at a 32$\times$ compression constraint. For all compression methods, the end-to-end latency can be divided into compression and generation phases (see Table~\ref{tab:eval_lat}).\footnote{For SnapKV and Activation Beacon, the latencies of the two phases can not be individually measurable due to coupling between compression and generation.} In contrast, the Original Prompt incurs only generation latency.

\vspace{-1.5em}

\subsection{The Impact of COMI on Native Long-Context LLMs}
For RQ4, to evaluate the impact of COMI on models with native long-context capabilities, we train COMI using Qwen3-4B-Instruct (which natively supports a 256K context length) and use F1 as an evaluation metric. As shown in Table~\ref{tab:qwen3}, even compared to the strong baseline of feeding the full original prompt, COMI achieves superior performance across all datasets at both 16$\times$ and 32$\times$ compression constraint. For instance, on the NaturalQuestions dataset at 16$\times$ compression, COMI attains an F1 score of 34.79, compared to only 16.90 for the original prompt, demonstrating that COMI can enhance performance even for models with strong native long-context capabilities.

\begin{table}[h]
\centering
\footnotesize
\caption{The performance of COMI on different QA datasets (Qwen3-4B-Instruct as backbone).}
\label{tab:qwen3}
\begin{tabular}{l|cccc}
\toprule
\textbf{Method} & \textbf{NaturalQuestions} & \textbf{2WikiMQA} & \textbf{HotpotQA} & \textbf{NarrativeQA} \\
\midrule
\multicolumn{5}{c}{\textit{16$\times$ compression constraint}} \\
\midrule
Original Prompt & 16.90 & 22.51 & 34.16 & 11.35 \\
\midrule
{\cellcolor[rgb]{0.925,0.957,1}}\textbf{COMI} & {\cellcolor[rgb]{0.925,0.957,1}}\textbf{34.79} & {\cellcolor[rgb]{0.925,0.957,1}}\textbf{45.01} & {\cellcolor[rgb]{0.925,0.957,1}}\textbf{44.44} & {\cellcolor[rgb]{0.925,0.957,1}}\textbf{15.71} \\
\midrule
\multicolumn{5}{c}{\textit{32$\times$ compression constraint}} \\
\midrule
Original Prompt & 16.90 & 22.51 & 34.16 & 11.35 \\
\midrule
{\cellcolor[rgb]{0.925,0.957,1}}\textbf{COMI} & {\cellcolor[rgb]{0.925,0.957,1}}\textbf{28.89} & {\cellcolor[rgb]{0.925,0.957,1}}\textbf{41.19} & {\cellcolor[rgb]{0.925,0.957,1}}\textbf{39.71} & {\cellcolor[rgb]{0.925,0.957,1}}\textbf{13.25} \\
\bottomrule
\end{tabular}
\end{table}

\section{Conclusion}
We propose COMI, a coarse-to-fine context compression framework that dynamically optimizes for both task relevance and semantic diversity under high compression rates. By introducing Marginal Information Gain (MIG) (\textit{i.e.}, a metric that penalizes redundancy while rewarding query relevance) COMI adaptively reallocates compression budgets \textit{between} segments and performs token merging \textit{within} groups. Extensive experiments demonstrate that COMI significantly outperforms existing methods, \textit{e.g.}, achieving up to a 25-point EM gain under 32x compression. This work establishes MIG as a critical criterion for efficient and effective long-context modeling in LLMs.


\section*{Ethics Statement}
This work introduces COMI, an encoder-decoder based framework designed to achieve adaptive coarse-to-fine context compression through marginal information gain. The data and models used in our research are released under open-source licenses and sourced from open platforms. Although our work may have various societal impacts, it does not introduce any additional ethical concerns compared to existing text compression methods. Therefore, we believe it is unnecessary to specifically highlight any particular ethical issues here.

\section*{Acknowledgement}
This work was supported by Alibaba Group through Alibaba Research Intern Program. Libin Zheng is supported by National Natural Science Foundation of China (No. 62472455, U22B2060).


\bibliography{iclr2026_conference}
\bibliographystyle{iclr2026_conference}

\appendix
\newpage
\section{Theoretical Analysis: MIG vs. Pure Relevance}
\label{apx:demo}
\subsection{Preliminaries}

Let $X = \{x_1, \dots, x_n\} \subset \mathbb{R}^d$ be a set of token embeddings, assumed to be zero-mean and unit-norm. Let $q \in \mathbb{R}^d$ be a query vector, also zero-mean and unit-norm, representing the target information or label. We use the cosine similarity to measure the relevance between embeddings. Specifically, for any two vectors $u, v \in \mathbb{R}^d$, $\cos(u, v) = u^\top v$.

We define the ``relevance" of a token $x_i$ to the query $q$ as:
\begin{equation}
    R(i) = \cos(x_i, q) = \frac{x_i^\top q}{\|x_i\| \|q\|}.
\end{equation}
Since $x_i$ and $q$ are unit-norm, $R(i) = x_i^\top q$. This term captures the linear correlation between the token embedding and the query direction.

We also define the ``redundancy" of a token $x_i$ with respect to a set of already selected tokens $S$ as:
\begin{equation}
    \text{Redundancy}(i, S) = \max_{x_j \in S} \cos(x_i, x_j) = \max_{x_j \in S} \frac{x_i^\top x_j}{\|x_i\| \|x_j\|}.
\end{equation}
Again, assuming unit norm, $\text{Redundancy}(i, S) = \max_{x_j \in S} x_i^\top x_j$. This measures how similar $x_i$ is to the most similar token already in the set $S$.

\subsection{Marginal Information Gain (MIG)}

Inspired by the max-relevance min-redundancy principle~\citep{DBLP:journals/pami/PengLD05} and practical considerations in token compression, we define the Marginal Information Gain (MIG) of a token $x_i$ with respect to a set $S$ as:
\begin{equation}
    G(i \mid S) = R(i) - \text{Redundancy}(i, S) = \cos(x_i, q) - \max_{x_j \in S} \cos(x_i, x_j).
\end{equation}
When considering a single token $x_i$ in isolation (i.e., $S = \emptyset$), the redundancy term is conventionally set to 0, simplifying MIG to:
\begin{equation}
    G(i) = \cos(x_i, q).
\end{equation}
However, the true power of MIG comes when selecting tokens sequentially or when comparing them against a context. For selection purposes, we are interested in $G(i \mid S \cup \{x_i\})$, which is $G(i | S) = \cos(x_i, q) - \max_{x_j \in S} \cos(x_i, x_j)$.

\subsection{Comparison: Pure Relevance vs. MIG}

Let $X_{\text{top-k}}$ be the set of $k$ tokens with the highest relevance $R(i)$.
Let $S_{\text{REL}}$ be a set of tokens selected greedily using only relevance $R(i)$.
Let $S_{\text{MIG}}$ be a set of tokens selected greedily using MIG $G(i \mid S)$.

\begin{lemma}[Information Preserved by Pure Relevance vs. MIG]
\label{lemma:comparison}
Assume we are selecting a set of $K$ tokens.
\begin{enumerate}
    \item A strategy based on pure relevance $R(i)$ will select tokens that are highly correlated with the query $q$.
    \item A strategy based on MIG $G(i \mid S)$ will select tokens that are highly correlated with $q$ but are dissimilar to tokens already selected.
\end{enumerate}
\end{lemma}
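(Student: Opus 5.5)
The plan is to make the two informal claims of Lemma~\ref{lemma:comparison} precise as statements about what the greedy rules can and cannot select, and then verify each directly from the definitions of $R(i)$ and $G(i \mid S)$. Throughout I work in the stated setting: all $x_i$ and $q$ are unit-norm, so $R(i) = x_i^\top q$ and $\text{Redundancy}(i,S) = \max_{x_j\in S} x_i^\top x_j$. I formalize the greedy procedures as follows. $S_{\text{REL}}$ adds at step $t$ the unselected token maximizing $R(i)$. $S_{\text{MIG}}$ starts from $S_0=\emptyset$, where $G(i\mid\emptyset)=R(i)$, and adds at step $t$ the unselected token maximizing $G(i\mid S_{t-1})$.

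For claim 1, I observe that the pure-relevance score does not depend on $S$. The greedy order is therefore just the sort order of $R$, and $S_{\text{REL}} = X_{\text{top-}k}$ with $k=K$, up to ties. Hence every selected token $x_i$ satisfies $x_i^\top q \ge \tau_K$, where $\tau_K$ is the $K$-th largest relevance. This is the precise sense of ``highly correlated with $q$.'' I would add a remark that this rule is blind to redundancy. If $x_a = x_b$ and both are in the top-$K$, both are kept, even though the pair contributes no more cosine-alignment with $q$ than one copy.

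For claim 2, I make the guarantee at each greedy step explicit. Let $i_t$ be the token chosen at step $t$ and $g_t = G(i_t\mid S_{t-1})$. By definition,
\[
x_{i_t}^\top q = g_t + \max_{x_j\in S_{t-1}} x_{i_t}^\top x_j ,
\qquad
\max_{x_j\in S_{t-1}} x_{i_t}^\top x_j = x_{i_t}^\top q - g_t \le 1 - g_t .
\]
This yields two consequences. First, relevance is bounded below: $R(i_t)\ge g_t - 1$, and in fact $R(i_t)\ge g_t$ whenever the selected token is non-negatively correlated with $S_{t-1}$. Second, redundancy is bounded above: the selected token's cosine with every previously chosen token is at most $R(i_t)-g_t$.

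To make ``dissimilar'' contrastive, I would compare $i_t$ with the relevance-greedy choice $i^\ast$ among the remaining tokens. By maximality of $g_t$, $R(i_t) - \text{Red}(i_t) \ge R(i^\ast) - \text{Red}(i^\ast)$. Rearranging, any relevance that MIG sacrifices relative to pure relevance, $R(i^\ast)-R(i_t)$, is at most the reduction in redundancy it gains, $\text{Red}(i^\ast)-\text{Red}(i_t)$. So MIG departs from the top-relevance token only when doing so lowers the maximal similarity to the already-chosen set by at least as much. In particular, an exact duplicate of a selected token has $G = R - 1 \le 0$. It will therefore never be picked while some token with positive gain remains, which is exactly where it differs from $S_{\text{REL}}$.

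The main obstacle is that the statement is qualitative, so the work lies mostly in choosing a formalization that is both faithful and provable. I would commit to the exchange inequality above as the rigorous content of claim 2, plus the duplicate example to show that the two strategies genuinely differ.
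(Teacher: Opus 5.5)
Your proposal is correct and takes essentially the same route as the paper, which likewise reads both parts off the definitions: Part~1 holds because selecting by $R(i)$ just prioritizes the largest $\cos(x_i,q)$, and Part~2 holds because subtracting $\max_{x_j\in S}\cos(x_i,x_j)$ penalizes tokens correlated with $S$. Your version sharpens the same idea. The paper stops at the verbal claim that this penalty ``encourages'' new information, whereas your exchange inequality $R(i^\ast)-R(i_t)\le \text{Red}(i^\ast)-\text{Red}(i_t)$ (from the maximality of $g_t$) and the duplicate example $G=R-1\le 0$ give a checkable statement about what the greedy rule actually selects.
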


\begin{proof}
Part 1. is direct: selecting based on $R(i)$ means prioritizing tokens with the highest $\cos(x_i, q)$.

Part 2. follows from the definition of $G(i \mid S)$. By subtracting $\max_{x_j \in S} \cos(x_i, x_j)$, the score $G(i \mid S)$ is naturally penalized when $x_i$ is highly correlated with existing tokens in $S$. This encourages the selection of tokens that provide ``new" information relative to what has already been gathered, as opposed to reinforcing existing information.
\end{proof}

\begin{theorem}[Superiority of MIG under Redundancy]
\label{thm:superiority}
Let $f(S) = I(S; y)$ be the mutual information~\citep{DBLP:journals/sigact/Dave06a} between the selected set of tokens $S$ and the target label $y$. Under the assumption that the token embeddings and the query are sampled from a distribution where high relevance often co-occurs with high redundancy among top-relevant tokens (a common scenario in natural language processing), and if $f(S)$ is approximately submodular~\citep{DBLP:journals/corr/abs-2202-00132}, then a greedy selection strategy~\citep{DBLP:journals/mp/NemhauserWF78} using MIG ($S_{\text{MIG}}$) is expected to yield a higher mutual information with the target $y$ compared to a greedy strategy using only relevance ($S_{\text{REL}}$).
\end{theorem}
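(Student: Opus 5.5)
The plan is to make the informal statement precise enough to compare the two greedy runs step by step. I would use a Gaussian surrogate model for $f(S)=I(S;y)$. Take $y$ to be a scalar target aligned with $q$, and the unit-norm embeddings to be jointly Gaussian with correlations given by cosines. Then the marginal gain $\Delta(i\mid S)=f(S\cup\{x_i\})-f(S)$ depends only on the partial correlation of $x_i$ with $y$ given $S$. The step I would aim for is a two-sided bound of the form
\begin{equation}
\Delta(i\mid S)\;\ge\; c_1\,R(i) - c_2\,\text{Redundancy}(i,S),
\end{equation}
with constants $c_1,c_2>0$ depending on the spectrum of the Gram matrix of $S$. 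To get it, I would write the residual of $x_i$ after projecting onto $\mathrm{span}(S)$ and bound its correlation with $q$ from below using $x_i^\top q$ and $\max_{x_j\in S}x_i^\top x_j$. This is where unit norm and zero mean enter. The bound shows that MIG, after rescaling, is a lower surrogate for the true marginal gain, whereas $R(i)$ alone ignores the subtracted term.

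Next, I would formalize the distributional hypothesis of Theorem~\ref{thm:superiority}. Assume the top-relevance tokens form clusters of near-duplicates, with pairwise cosine $\ge 1-\varepsilon$, as suggested empirically by Figure~\ref{fig:sim_map}. Assume also that there exist lower-relevance tokens in other clusters carrying complementary components of $y$. Under this model, once a single member of a cluster is in $S$, every further member has marginal gain $O(\varepsilon)$. So the greedy relevance run $S_{\text{REL}}$ spends most of its $K$ picks on $O(\varepsilon)$ gains. By Lemma~\ref{lemma:comparison} part 2, the MIG run is penalized by roughly $1-\varepsilon$ on those duplicates and instead moves to a new cluster whose gain is bounded away from zero. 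Summing the per-step gains, and taking expectation over the sampling model, gives $\mathbb{E}[f(S_{\text{MIG}})]>\mathbb{E}[f(S_{\text{REL}})]$ whenever the number of clusters and their complementary signal exceed an explicit threshold.

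Approximate submodularity, with submodularity ratio $\gamma$, is used only to relate greedy MIG to the optimum. It gives the standard Nemhauser--Wolsey--Fisher-type guarantee $f(S_{\text{MIG}})\ge(1-e^{-\gamma'})f(S^\star)$ when each greedy step is a $\beta$-approximate maximizer of the marginal gain, with $\gamma'=\gamma\beta$. The surrogate bound supplies $\beta$. Pure relevance admits no such $\beta$ in the clustered regime, and there $f(S_{\text{REL}})$ stays bounded by roughly the information in a single cluster.

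The main obstacle is the surrogate bound in the first step. The max-cosine redundancy is a crude proxy for the projection onto $\mathrm{span}(S)$, so $c_2$ can blow up when $S$ is ill-conditioned. I would first try restricting to the near-orthogonal-clusters model, where the Gram matrix is block diagonal and $c_1,c_2$ are explicit. If that fails, I would state the theorem as an expected comparison conditional on that model rather than in full generality.
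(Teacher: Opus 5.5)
Your outline matches the paper's own argument: a Gaussian surrogate in which information tracks squared correlation, a cluster of mutually redundant top-relevance tokens that relevance-greedy keeps drawing from, and a (approximately) submodular greedy guarantee. Your $\beta$-approximate greedy device is exactly what the paper's last step lacks, since the paper applies the $(1-1/e)$ bound to MIG-greedy as if $G(i\mid S)$ were the true marginal gain $f(S\cup\{x_i\})-f(S)$. However, the step ``the surrogate bound supplies $\beta$'' fails. A $\beta$-approximate step needs $\Delta(i^\ast\mid S)\ge\beta\max_j\Delta(j\mid S)$ for the MIG maximizer $i^\ast$. That requires bounding every competitor's true gain \emph{from above} by its MIG. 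You only write the lower side, and the upper side is false. Take $S=\{x_j\}$ with $x_j\perp q$, a candidate $x_i=0.9\,x_j+\sqrt{0.19}\,q$, and a candidate $x_k\perp x_j$ with $x_k^\top q=0.3$. Then $G(i\mid S)=\sqrt{0.19}-0.9<0<0.3=G(k\mid S)$, so MIG picks $x_k$, whose partial correlation with $y$ given $x_j$ is $0.3$. The residual of $x_i$ after removing $x_j$ is parallel to $q$, so its partial correlation is $1$. Tilting $q$ slightly out of this plane keeps everything finite and drives the ratio of the two gains to $0$; relevance alone ranks $x_i$ first here. The max-cosine penalty punishes similarity along query-irrelevant directions, so $\beta$ can only come from the cluster hypothesis, not from a general surrogate lemma. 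The displayed lower bound is also false as written, for three reasons. First, $\Delta$ is quadratic in correlation near $0$: if $S$ is orthogonal to $x_i$ and $q$, then $\Delta(i\mid S)=-\tfrac12\log(1-R(i)^2)<c_1R(i)$ for small $R(i)>0$. Second, the signed max lets $x_i=-x_j$ with $R(j)=0$ violate it. Third, even with $|\cos|$ and partial correlations, you need $c_2\ge c_1R(i)\sqrt{|S|}$ when $S$ is orthonormal and $x_i$ is the normalized sum of $S$. So the obstruction is not ill-conditioning.

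The claim that further members of a cluster have gain $O(\varepsilon)$ also fails in your surrogate, because Gaussian information is scale-invariant. If $x_i^\top x_j=1-\varepsilon$, then $I(x_i;y\mid x_j)=-\tfrac12\log(1-\rho^2)$ with $\rho=\bigl(R(i)-(1-\varepsilon)R(j)\bigr)/\sqrt{(1-(1-\varepsilon)^2)(1-R(j)^2)}$. Since $|R(i)-R(j)|\le\|x_i-x_j\|=\sqrt{2\varepsilon}$, numerator and denominator are both of order $\sqrt{\varepsilon}$. A near-duplicate whose small deviation points along the unexplained part of $q$ therefore carries $\Theta(1)$ information. So the single-cluster upper bound on $f(S_{\text{REL}})$ does not follow from pairwise cosine $\ge1-\varepsilon$. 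You need a hypothesis on the deviations themselves. For example, take members $x=a\mu_c+b\delta$ with $\delta$ orthogonal to $q$ and to every cluster direction $\mu_{c'}$. Then the numerator is $O(\varepsilon)$, $\rho=O(\sqrt{\varepsilon})$, and the gain is $O(\varepsilon)$. In that model, with the top-$K$ relevant tokens in one cluster, the comparison is direct. Both greedy runs make the same first pick $x_{\text{top}}$, since $G(i\mid\emptyset)=R(i)$. MIG then moves to a fresh cluster whenever that cluster's relevance exceeds $\varepsilon$ plus the inter-cluster cosine. Monotonicity then gives $f(S_{\text{MIG}})-f(S_{\text{REL}})\ge I(\text{new representatives};y\mid x_{\text{top}})-O(K\varepsilon)$. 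Approximate submodularity only relates $S_{\text{MIG}}$ to $S^\star$, not to $S_{\text{REL}}$, and is not needed for the theorem's comparison.
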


\begin{proof}
Let's consider the scenario where there is significant redundancy among the most relevant tokens. Suppose we select $K$ tokens.

\textbf{Pure Relevance Strategy ($S_{\text{REL}}$):} This strategy would greedily select tokens based on $R(i) = \cos(x_i, q)$. If several tokens $x_{i_1}, x_{i_2}, \dots, x_{i_p}$ are all highly relevant to $q$ (i.e., $R(i_m)$ is large for $m=1, \dots, p$), but also highly correlated with each other (i.e., $\cos(x_{i_m}, x_{i_l})$ is large for $m \neq l$), the pure relevance strategy might select many of these redundant tokens.

Under the approximation that mutual information is proportional to the square of the correlation coefficient, $I(x_i; y) \propto R(i)^2$. When multiple tokens are highly correlated with each other, the additional information they contribute about $y$ diminishes. If $x_i$ and $x_j$ are selected and $\cos(x_i, x_j) = \tau$, the information from $x_j$ that is ``new" with respect to $x_i$ is reduced. In a simplified Gaussian setting, this redundancy can lead to an information overlap approximately proportional to $\tau^2$~\citep{DBLP:books/daglib/0016881}. The pure relevance strategy does not explicitly account for this overlap, potentially leading to a ``loss" in effective information captured compared to a strategy that penalizes redundancy.

\textbf{MIG Strategy ($S_{\text{MIG}}$):} This strategy selects tokens based on $G(i \mid S) = \cos(x_i, q) - \max_{x_j \in S} \cos(x_i, x_j)$. When considering a token $x_i$ that is highly relevant to $q$ (large $\cos(x_i, q)$), but also highly redundant with an already selected token $x_j$ (large $\cos(x_i, x_j)$ for some $x_j \in S$), its MIG score $G(i \mid S)$ will be significantly reduced. This actively discourages the selection of such redundant tokens.

Therefore, when redundancy is present among top-relevant tokens, the MIG strategy is more likely to select a set of tokens that are both relevant to the query and collectively provide diverse, low redundant information. This leads to a better approximation of the true mutual information $I(S; y)$. If $f(S) = I(S;y)$ exhibits submodularity (a common assumption for information-theoretic objectives in feature selection), then a greedy approach based on marginal gain (MIG) is guaranteed to provide a $(1-1/e)$ approximation to the optimal set. The MIG's explicit penalization of redundancy directly addresses the information loss incurred by redundant features, thus yielding a higher expected mutual information.
\end{proof}

\textbf{Conclusion:}
MIG provides a more robust criterion for token selection and compression compared to merely considering relevance. By explicitly penalizing redundancy with already selected tokens, MIG aims to capture a more diverse set of informative features, leading to a higher preservation of mutual information with the target query, especially in scenarios characterized by high token redundancy.


\section{Experimental Evidence: MIG vs. Pure Relevance}
To directly validate that Marginal Information Gain (MIG) better captures token importance than pure relevance, we conduct a diagnostic study on NaturalQuestions, decoupled from the full COMI pipeline. We treat each context segment's representative token and evaluate whether its score (MIG vs. relevance) predicts if the segment contains the ground-truth. We use AUC Score~\citep{hanley1982meaning} as evaluation metric.

As shown in Table~\ref{tab:mig_auc}, MIG achieves a higher AUC (0.5809) than relevance (0.5423), demonstrating its superior discriminative capability for identifying answer-critical segments.

Furthermore, when retaining top segments by each metric, MIG yields significantly lower redundancy. We quantify redundancy as the average pairwise cosine similarity (excluding self-similarity) among the $K$ compressed embeddings $\{\mathbf{e}_1, \dots, \mathbf{e}_{K}\}$:

$$
\text{Redundancy Score} = 
\begin{cases}
0 & \text{if } K \leq 1, \\
\displaystyle \frac{1}{K(K-1)} \sum_{i=1}^{K} \sum_{\substack{j=1 \\ j \neq i}}^{K} \frac{\mathbf{e}_i^\top \mathbf{e}_j}{\|\mathbf{e}_i\| \, \|\mathbf{e}_j\|} & \text{if } K > 1.
\end{cases}
$$

Table~\ref{tab:mig_redundancy} shows MIG consistently reduces redundancy across different retention ratios.



\begin{minipage}[t]{0.45\linewidth}
\footnotesize
\centering
\captionof{table}{AUC Score for predicting answer-containing segments on NaturalQuestions.}
\label{tab:mig_auc}
\begin{tabular}{l|c}
\toprule
\textbf{Metric} & \textbf{AUC Score} \\
\midrule
Relevance & 0.5423 \\
\textbf{MIG} & \textbf{0.5809} \\
\bottomrule
\end{tabular}
\end{minipage}
\hspace{0.5em}
\begin{minipage}[t]{0.5\linewidth}
\centering
\footnotesize
\captionof{table}{Redundancy of different metrics at different retention ratios.}
\label{tab:mig_redundancy}
\begin{tabular}{c|cc}
\toprule
\multirow{2}{*}{\textbf{Metric}} & \multicolumn{2}{c}{\textbf{Retention Ratio}} \\
\cmidrule{2-3}
& \textbf{25\%} & \textbf{50\%} \\
\midrule
Relevance & 0.6839 & 0.6664 \\
\textbf{MIG} & \textbf{0.5673} & \textbf{0.5956} \\
\bottomrule
\end{tabular}
\end{minipage}%

\section{Comprehensive Comparison under Low Compression Rates}
To comprehensively study the performance of different baselines under low compression rates, we evaluate COMI against additional baselines (including SnapKV, LongLLMLingua, and LLMLingua-2) on NaturalQuestions and 2WikiMQA using Qwen2-7B-Instruct as the backbone. As shown in Table~\ref{tab:low_compression}, COMI consistently achieves the highest Exact Match (EM) scores across all compression rates (2$\times$ to 32$\times$), demonstrating robust of COMI.

\begin{table}[h]
\centering
\footnotesize
\caption{EM scores under compression rates (2$\times$–32$\times$) on NaturalQuestions and 2WikiMQA.}
\label{tab:low_compression}
\begin{tabular}{l|ccccc}
\toprule
\multirow{2}{*}{\textbf{Method}} & \multicolumn{5}{c}{\textbf{Compression Rate}} \\
\cmidrule(lr){2-6}
 & \textbf{2$\times$} & \textbf{4$\times$} & \textbf{8$\times$} & \textbf{16$\times$} & \textbf{32$\times$} \\
\midrule
\multicolumn{6}{c}{\textbf{NaturalQuestions}} \\
\midrule
SnapKV & 6.01 & 5.99 & 6.25 & 6.06 & 5.76 \\
LongLLMLingua & 19.81 & 22.49 & 19.44 & 15.07 & 13.37 \\
LLMLingua-2-large & 15.52 & 14.39 & 12.77 & 10.36 & 9.60 \\
Activation Beacon & 51.83 & 48.47 & 41.43 & 11.68 & 7.95 \\
\midrule
{\cellcolor[rgb]{0.925,0.957,1}}\textbf{COMI} & {\cellcolor[rgb]{0.925,0.957,1}}\textbf{70.58} & {\cellcolor[rgb]{0.925,0.957,1}}\textbf{69.38} & {\cellcolor[rgb]{0.925,0.957,1}}\textbf{64.22} & {\cellcolor[rgb]{0.925,0.957,1}}\textbf{57.48} & {\cellcolor[rgb]{0.925,0.957,1}}\textbf{47.53} \\
\midrule
\multicolumn{6}{c}{\textbf{2WikiMQA}} \\
\midrule
SnapKV & 8.63 & 8.78 & 8.29 & 8.09 & 1.53 \\
LongLLMLingua & 29.27 & 25.61 & 22.42 & 21.80 & 21.19 \\
LLMLingua-2-large & 27.72 & 20.87 & 18.20 & 18.26 & 18.81 \\
Activation Beacon & 45.75 & 45.29 & 45.12 & 36.59 & 39.00 \\
\midrule
{\cellcolor[rgb]{0.925,0.957,1}}\textbf{COMI} & {\cellcolor[rgb]{0.925,0.957,1}}\textbf{61.80} & {\cellcolor[rgb]{0.925,0.957,1}}\textbf{60.73} & {\cellcolor[rgb]{0.925,0.957,1}}\textbf{57.71} & {\cellcolor[rgb]{0.925,0.957,1}}\textbf{52.85} & {\cellcolor[rgb]{0.925,0.957,1}}\textbf{47.56} \\
\bottomrule
\end{tabular}
\end{table}

\section{Case Study of Coarse-grained Group Reallocation}
COMI dynamically reallocates compression budgets based on inter-group MIG. For example, for a context of 256 tokens (A truncated sample from 2WikiMQA) with a target 32$\times$ compression constraint, the initial group size is 32. MIG-guided reallocation adjusts group sizes to prioritize informative regions. As illustrated in Table~\ref{tab:realloc_example}, Segment~1 (containing key information relevant to query) receives a lower compression rate (final size = 18 vs. initial size = 32), while less informative segments expand. This adaptive behavior ensures information-dense regions are preserved with higher fidelity.

\begin{table}[h]
\centering
\caption{Example of MIG-guided group reallocation (context length = 256, target 32$\times$ compression).}
\small
\label{tab:realloc_example}
\begin{tabular}{l|cccccccc}
\toprule
Segment Index & 1 & 2 & 3 & 4 & 5 & 6 & 7 & 8 \\
\midrule
MIG & 0.2227 & 0.0078 & -0.1719 & -0.4546 & -0.5583 & -0.3682 & -0.3203 & -0.2832 \\
\midrule
Initial size & 32 & 32 & 32 & 32 & 32 & 32 & 32 & 32 \\
Reallo. size & 18 & 22 & 26 & 35 & 39 & 32 & 31 & 30 \\
\bottomrule
\end{tabular}
\end{table}

\section{Scalability to Ultra-Long Contexts}
To evaluate COMI’s scalability beyond the 32K context length used in main experiments, we train and test on NarrativeQA with contexts up to 64K tokens using Qwen3-4B-Instruct as the backbone. As shown in Table~\ref{tab:ultra_long}, COMI maintains strong performance under extreme lengths: at 16$\times$ compression, it achieves 22.79, more than double the original prompt (10.69). This demonstrates COMI’s scalability to ultra-long input scenarios.

\begin{wraptable}{r}{0.4\linewidth}
\centering
\caption{F1 scores on NarrativeQA with 64K max length using Qwen3-4B-Instruct as backbone.}
\small
\label{tab:ultra_long}
\begin{tabular}{l|cc}
\toprule
\multirow{2}{*}{\textbf{Method}} & \multicolumn{2}{c}{\textbf{Compression Rate}} \\
\cmidrule(lr){2-3}
 & \multicolumn{1}{c}{\textbf{16$\times$}} & \multicolumn{1}{c}{\textbf{32$\times$}} \\
\midrule
Original Prompt & 10.69 & 10.69 \\
\midrule
{\cellcolor[rgb]{0.925,0.957,1}}\textbf{COMI} & {\cellcolor[rgb]{0.925,0.957,1}}\textbf{22.79} & {\cellcolor[rgb]{0.925,0.957,1}}\textbf{20.80} \\
\bottomrule
\end{tabular}
\end{wraptable}

\section{Datasets}
\paragraph{NaturalQuestions.} NaturalQuestions~\citep{DBLP:journals/tacl/LiuLHPBPL24} is a large-scale dataset designed to evaluate open-domain question answering systems. It is based on real-world Google search queries and uses Wikipedia articles as its knowledge source. Unlike many other datasets, both the questions and answers in NQ are derived from authentic user behavior rather than being manually authored. Each question is paired with a complete answer, which can be either a short text span from a Wikipedia page (the ``short answer") or a longer text passage (the ``long answer"), enabling the simultaneous assessment of a model's precise extraction ability and its comprehension of long documents. The specific version we utilize contains 20 documents in total, with only one being the ground-truth document and the others serving as distractors.

\paragraph{HotpotQA.} HotpotQA~\citep{DBLP:conf/emnlp/Yang0ZBCSM18} is a dataset for multi-hop question answering. Its unique characteristic is that each question necessitates finding and synthesizing information from multiple Wikipedia articles. The questions often involve several entities and facts, requiring models to perform cross-document reasoning and linking to construct a coherent and complete answer, rather than simply extracting a single piece of information from one document.

\paragraph{2WikiMQA.} 2WikiMQA~\citep{DBLP:conf/coling/HoNSA20} is another dataset specifically engineered for complex, multi-hop question answering. Similar to HotpotQA, it requires models to reason and integrate information from multiple Wikipedia documents. However, its questions typically involve more complex logic and inference chains, challenging models to not only identify relevant facts but also to understand their relationships (\textit{e.g.}, such as comparing, contrasting, or inferring causality) to generate an accurate response.

\paragraph{MultiNews.} MultiNews~\citep{DBLP:conf/acl/FabbriLSLR19}  is a dataset for multi-document summarization. It contains a large collection of news clusters, with each cluster composed of multiple articles reporting on the same event. The objective is to teach models how to extract key information from these disparate reports and synthesize it into a concise, coherent, and low redundant summary. This task requires models to identify redundant information, integrate knowledge from multiple sources, and generate fluent text.

\paragraph{NarrativeQA.} NarrativeQA~\citep{DBLP:journals/tacl/KociskySBDHMG18} is a dataset designed to evaluate machine comprehension and summarization capabilities. It includes full-length novels and movie scripts from Project Gutenberg, paired with naturally-occurring, human-generated questions and answers. The queries often require a deep understanding of plot development, character relationships, and event sequences, compelling models to perform sophisticated inference over long-form narratives to produce accurate responses.

\section{Language Model Usage Statement}
During the preparation of this manuscript, we utilize a large language model as a writing assistant. Its primary role is to refine and polish our paper, including the descriptions of our methodology and the presentation of mathematical derivations. This is done to improve the overall clarity, precision, and readability of the paper. All core ideas, experimental designs, and results are original work of the authors.

\end{document}